\pgfplotsset{compat=newest}
\newcommand*\de{\mathop{}\!\mathrm{d}}
\DeclareMathOperator*{\EV}{\mathbb{E}}
\newcommand{\svec}[0]{\mathbf{s}}
\newcommand{\approxx}[1]{\hat{#1}}
\newcommand{\yvec}{\mathbf{y}}
\newcommand{\xvec}{\mathbf{x}}
\newcommand{\zvec}{\mathbf{z}}
\newcommand{\lvec}{\mathbf{l}}
\newcommand{\hvec}{\mathbf{h}}
\newcommand{\deltab}{\bm{\delta}}
\newcommand{\gammab}{\bm{\gamma}}
\newcommand{\taub}{\bm{\tau}}
\newtheorem{theorem}{Theorem}
\newtheorem{definition}{Definition}
\newtheorem{proposition}{Proposition}
\newenvironment{axioms}
{\enumerate[label=\textbf{A\arabic*.}, ref=A\arabic*]}
{\endenumerate}
\newcommand\varitem[1]{\item[\textbf{A\arabic{enumi}\rlap{$#1$}.}]%
	\edef\@currentlabel{A\arabic{enumi}{$#1$}}}
\DeclareMathOperator\erf{erf}
\newcommand{\e}[1]{e^{\textstyle #1}}
\begin{document}

%

\markboth{S. TOSATTO et~al.}{An Upper Bound of the Bias of N.W. Kernel Regression}

\title{An Upper Bound of the Bias of Nadaraya--Watson Kernel Regression under Lipschitz Assumptions}

\author[1]{S. Tosatto}
\author[1]{R. Akrour}
\author[1,2]{J. Peters}
\affil[1]{
	Technische Universit{\"a}t Darmstadt \protect\\
	64289 Darmstadt, Germany \vspace{0.5em}
}
\affil[2]{
	Max Planck Institute for Intelligent Systems \protect\\
	70569 Stuttgart, Germany \protect\\\vspace{0.5em}
	\textsl{\{samuele.tosatto, riad.akrour, jan.peters\}@tu-darmstadt.de}
}
\date{}

\maketitle

\begin{abstract}
The Nadaraya--Watson kernel estimator is among the most popular nonparameteric regression technique thanks to its simplicity. Its asymptotic bias has been studied by Rosenblatt in 1969 and has been reported in a number of related literature. However, Rosenblatt's analysis is only valid for infinitesimal bandwidth.
In contrast, we propose in this paper an upper bound of the bias which holds for finite bandwidths. Moreover, contrarily to the classic analysis we allow for discontinuous first order derivative of the regression function, we extend our bounds for multidimensional domains and we include the knowledge of the bound of the regression function when it exists and if it is known, to obtain a tighter bound.  
We believe that this work has potential applications in those fields where some hard guarantees on the error are needed.
\end{abstract}


\section{Introduction}

Nonparametric regression and density estimation have been used in a wide spectrum of applications, ranging from economics \citep{bansal_nonparametric_1995}, system dynamics identification \citep{wang_gaussian_2006, nguyen-tuong_using_2010}, and reinforcement learning \citep{ormoneit_kernel-based_2002, kroemer_non-parametric_2011,deisenroth_pilco:_2011,kroemer_kernel-based_2012}.
In recent years, nonparameteric density estimation and regression have been dominated by parametric methods such as those based on deep neural networks. These parametric methods have demonstrated an extraordinary capacity in dealing with both high-dimensional data---such as images, sounds or videos---and large dataset. However, it is difficult to obtain strong guarantees on such complex models, which have been shown easy to fool \citep{moosavi-dezfooli_deepfool:_2016}.
Nonparametric techniques have the advantage of being easier to understand, and recent work overcame some of their limitations, by e.g. allowing linear-memory and sub-linear query time for density kernel estimation \citep{charikar_hashing-based-estimators_2017,backurs_space_2019}. These methods allowed nonparameteric kernel density estimation to be performed on datasets of $10^6$ samples and up to $784$ input dimension. As such, nonparametric methods are a relevant choice when one is willing to trade performance for statistical guarantees; and the contribution of this paper is to advance the state-of-the-art on such guarantees.  

Studying the error of a statistical estimator is important.
It can be used for example to tune the hyper-parameters by minimizing the estimated error \citep{hardle_asymptotic_1985,ray_bandwidth_1997,herrmann_choice_1992,kohler_review_2014}. To this end, the estimation error is usually decomposed into an estimation \textsl{bias} and \textsl{variance}. When it is not  possible to derive these quantities, one performs an asymptotic behavior analysis or a convergence to a probabilistic distribution of the error. While all aforementioned analyses give interesting insights on the error and allow for hyper-parameter optimization, they do not provide any strong guarantee on the error, i.e., we are not able to \textsl{upper bound} it with absolute certainty. 

Beyond hyper-parameter optimization, we argue that another important aspect of error analysis is to provide hard (non-probabilistic) bounds of the error for critical data-driven algorithms. We believe that in the close future, learning agents taking autonomous, data-driven, decisions will be increasingly present. These agents will for example be autonomous surgeons, self-driving cars or autonomous manipulators. In many critical applications involving these agents, it is of primary importance to bound the prediction error in order to provide some technical guarantees on the agent's behavior. In this paper we derive hard upper bounds of the estimation error in non-parametric regression with minimal assumptions on the problem such that the bound can be readily applied to a wide range of applications.

Specifically, we consider in this paper the Nadaraya--Watson kernel regression \citep{nadaraya_estimating_1964,watson_smooth_1964}, which can be seen as a conditional kernel density estimate, and we derive an upper bound of the estimation bias  for the Gaussian kernel
under weak local Lipschitz assumptions.
The reason for our choice of estimator falls of its inherent simplicity, in comparison to more sophisticated techniques.
The bias of the Nadaraya--Watson kernel regression has been previously studied by \cite{rosenblatt_conditional_1969}, and has been reported in a number of related work \citep{mack_convolution_1988,fan_design-adaptive_1992,fan_variable_1992,wasserman_all_2006}.
The main assumptions of Rosenblatt's analysis are $h_n \to 0$ (where $n$ is the number of samples) and $n h_n \to \infty$ where $h_n$ is the kernel's  bandwidth. The Rosenblatt's analysis suffers from an asymptotic error $o(h_n^2)$, which means that for large bandwidths it is not accurate. In contrast, we derive an upper bound of the bias of the Nadaraya--Watson kernel regression which is valid for any choice of bandwidth. 

Our analysis is built on weak Lipschitz assumptions \citep{miculescu_sufficient_2000}, which are milder than the (global) Lipschitz, as we require only $|f(x) - f(y)| \leq |x-y|$ $ \forall y \in \mathcal{C}$ given a fixed $x$, instead of the classic $|f(x) - f(y)| \leq |x-y|$ $ \forall y, x \in \mathcal{C}$---where $\mathcal{C}$ is the data domain.
Moreover, the classical analysis requires the knowledge of $m''$, and therefore the continuity of $m'$---where $m''$ and $m'$ are respectively second and first order derivative of the regression function. We relax this assumption, which allows us to obtain a bias upper bound even for functions such as $|x|$, at points where $m''$ is undefined.
When the bandwidth $h_n$ is large, the Rosenblatt's bias analysis, being only valid for $h_n \to 0$, tends to provide wrong estimates of the bias, as we can observe in the experimental section. Furthermore, we consider multidimensional input space, in order to open this analysis to more realistic settings.

\section{Preliminaries}

Consider the problem of estimating $\EV[Y | X= \xvec]$ where $X \sim f_X$ and $Y = m(X) + \epsilon$, with noise  $\epsilon$, i.e. $\EV[\epsilon] = 0$. The noise can depend on $\xvec$, but since our analysis is conducted point-wise for a given $\xvec$, $\epsilon_\xvec$ will be simply denoted by $\epsilon$. Let $m:\mathbb{R}^d\to \mathbb{R}$ be the \textsl{regression function} and $f_X$ a probability distribution on $X$ called \textsl{design}. In our analysis we consider $X \in \mathbb{R}^d$ and $Y \in \mathbb{R}$. 
The Nadaraya--Watson kernel estimate of $\EV[Y | X=\xvec]$ is 
\begin{align}
	\hat{m}(\xvec) = \frac{\sum_{i=1}^n K_{\hvec}(\xvec - \xvec_i) y_i}{\sum_{j=1}^n K_{\hvec}(\xvec - \xvec_j)},
\end{align} 
where $K_{\hvec}$ is a kernel function with bandwidth-vector $\hvec$, the $\xvec_i$ are drawn from the design $f_X$ and $y_i$ from $m(\xvec_i) + \epsilon$. Note that both the numerator and the denominator are proportional to Parzen-Rosenblatt density kernel estimates \citep{rosenblatt_remarks_1956,parzen_estimation_1962}.
We are interested in the point-wise bias of such estimate $\EV[\hat{m}(\xvec)]- m(\xvec)$. 
In the prior analysis of \cite{rosenblatt_conditional_1969}, knowledge of $m', m'', f_X, f_X'$ is required and $f, m'$ must be continuous in a neighborhood of $x$. In addition, and as discussed in the introduction, the analysis is limited to a one-dimensional design, and for an infinitesimal bandwidth. For clarity of exposition, we briefly present the classical bias analysis of \cite{rosenblatt_conditional_1969} before introducing our results. 
\begin{theorem}{\emph{Classic Bias Estimation \citep{rosenblatt_conditional_1969}.}}
	Let $m\!:\!\mathbb{R}\!\to\!\mathbb{R}$ be twice differentiable.
	Assume a set $\{x_i, y_i\}_{i=1}^n$ of i.i.d. samples from a distribution with non-zero differentiable density $f_X$. Assume $y_i = m(\svec_i) + \sigma(\epsilon_i)$, where $\epsilon_i$ are i.i.d. and zero-mean. The bias of the Nadaraya--Watson kernel in the limit of infinite samples and for $h \to 0$ and $nh_n \to \infty$ is 
	\begin{align}
	\EV \left[\lim_{n\to \infty}\approxx{m}_n(x) \right] - m(x)  & = h_n^2\left(\frac{1}{2}m''(x) + \frac{m'(x)f_X'(x)}{f_X(x)} \right)\int u^2 K(u)\de u + o_P\left(h_n^2\right) \nonumber \\
	& \approx h_n^2\left(\frac{1}{2}m''(x) + \frac{m'(x)f_X'(x)}{f_X(x)} \right)\int u^2 K(u)\de u . \nonumber
	\end{align}
\end{theorem}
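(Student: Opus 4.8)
The plan is to exploit the ratio structure of the estimator and reduce the problem to a deterministic asymptotic expansion. Write $\hat{m}_n(x) = \hat{g}_n(x)/\hat{f}_n(x)$, where $\hat{g}_n(x) = \frac{1}{n}\sum_i K_{h_n}(x-x_i)\,y_i$ and $\hat{f}_n(x) = \frac{1}{n}\sum_j K_{h_n}(x-x_j)$ is the Parzen--Rosenblatt density estimate. First I would pass to the population (infinite-sample) version at a fixed bandwidth $h$: by the law of large numbers, using $\mathbb{E}[\epsilon]=0$ and independence of the noise, both averages converge in probability to their expectations, so the limiting estimator is the deterministic ratio
\[
\bar m_h(x) = \frac{\int K_h(x-t)\, m(t)\, f_X(t)\,dt}{\int K_h(x-t)\, f_X(t)\,dt}.
\]
Since this limit is deterministic, the outer expectation acts trivially and the bias analysis reduces to an analytic expansion of $\bar m_h(x)$ as $h\to 0$; the side condition $n h_n \to \infty$ is what keeps the denominator bounded away from zero, so that the stochastic fluctuation discarded in passing to $\bar m_h$ enters only at order $o_P(h_n^2)$.

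Next I would substitute $u=(t-x)/h$ in both integrals, so that $K_h(x-t)\,dt$ becomes $K(u)\,du$ by symmetry of the kernel, giving
\[
\bar m_h(x) = \frac{\int K(u)\, m(x+hu)\, f_X(x+hu)\,du}{\int K(u)\, f_X(x+hu)\,du}.
\]
The core computation is then a second-order Taylor expansion of $m(x+hu)$ and $f_X(x+hu)$ about $x$, which is exactly where twice differentiability of $m$ and differentiability of $f_X$ near $x$ are used, with the remainders collected into $o(h^2)$. After multiplying the two expansions in the numerator and integrating against $K$, the first-order (odd) terms vanish by $\int u K(u)\,du = 0$, the zeroth moment $\int K(u)\,du = 1$ normalizes, and the only surviving corrections are carried by $\mu_2 \defeq \int u^2 K(u)\,du$. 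The numerator then reads $m f_X + h^2\mu_2\big(\tfrac12 m'' f_X + m' f_X' + \tfrac12 m f_X''\big) + o(h^2)$ and the denominator reads $f_X + \tfrac12 h^2\mu_2 f_X'' + o(h^2)$, all functions evaluated at $x$.

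The step I would single out as delicate is the expansion of this quotient. I would linearize $1/(f_X+\delta) = f_X^{-1}(1-\delta/f_X)+o(\delta)$ and retain terms up to $O(h^2)$. The point to watch is that the $\tfrac12 m f_X''$ contribution from the numerator is \emph{exactly} cancelled by the cross term $-\tfrac12 m f_X''$ produced when the denominator is expanded; this cancellation is precisely what collapses the bias to the clean form $h^2\mu_2\big(\tfrac12 m''(x) + m'(x) f_X'(x)/f_X(x)\big)$. Subtracting $m(x)$, identifying $h=h_n$, and reinstating $\mu_2=\int u^2 K(u)\,du$ yields the claimed expression, with the combined Taylor and stochastic error absorbed into $o_P(h_n^2)$. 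The main obstacle is therefore bookkeeping: organizing the ratio linearization so that every $O(h^2)$ term is carried consistently and the cancellation is made transparent; once this is done, the remainder of the argument is routine.
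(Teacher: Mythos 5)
Note first that the paper itself contains no proof of this statement: it is quoted from \citep{rosenblatt_conditional_1969} as background, and the appendix only proves Theorems~\ref{theo:bounded} and~\ref{theo:unbounded}. So your proposal can only be measured against the classical argument, which is essentially what you reproduce: the ratio decomposition $\hat g_n/\hat f_n$, passage to the population ratio $\bar m_h$, the substitution $u=(t-x)/h$, second-order Taylor expansion, the kernel moments $\int K(u)\de u=1$, $\int uK(u)\de u=0$, $\mu_2=\int u^2K(u)\de u$, and the quotient linearization. Your bookkeeping is correct: the numerator expansion $mf_X+h^2\mu_2\big(\tfrac12 m''f_X+m'f_X'+\tfrac12 mf_X''\big)+o(h^2)$, the denominator $f_X+\tfrac12 h^2\mu_2 f_X''+o(h^2)$, and the cancellation of the $\tfrac12 m f_X''$ terms are exactly right and yield the claimed $h^2\mu_2\big(\tfrac12 m''+m'f_X'/f_X\big)+o(h^2)$. (Minor nit: the substitution $K_h(x-t)\de t \mapsto K(u)\de u$ is a change of variables, not a symmetry fact; symmetry is what you need for $\int uK(u)\de u=0$.)

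One substantive mismatch deserves flagging: your second-order expansion of $f_X(x+hu)$ invokes $f_X''(x)$, whereas the theorem assumes only a differentiable density. That $f_X''$ cancels from the final answer is a hint the quotient route does unnecessary work. The cleaner fix is to subtract $m(x)$ \emph{before} expanding the ratio, writing the bias as $\big(\int K(u)\,(m(x+hu)-m(x))\,f_X(x+hu)\de u\big)\big/\big(\int K(u)\,f_X(x+hu)\de u\big)$; then $m$ is expanded to second order but $f_X$ only to first, the numerator becomes $h^2\mu_2\big(\tfrac12 m''f_X+m'f_X'\big)+o(h^2)$ directly, the denominator tends to $f_X(x)$, and both the $f_X''$ terms and the delicate cancellation you single out disappear. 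This subtract-first decomposition is in fact the one the paper uses in its own proofs of Theorems~\ref{theo:bounded} and~\ref{theo:unbounded} (the step reducing the bias to $\int K_{\hvec}(\xvec-\zvec)(m(\zvec)-m(\xvec))f_X(\zvec)\de\zvec$ over $\int K_{\hvec}(\xvec-\zvec)f_X(\zvec)\de\zvec$), so adopting it would also align your argument with the paper's machinery. Finally, your treatment of the stochastic part --- law of large numbers gives the deterministic ratio, and $nh_n\to\infty$ confines fluctuations to $o_P(h_n^2)$ --- is asserted rather than proved; controlling the expectation of a ratio of correlated random averages is the genuinely technical content of Rosenblatt's analysis. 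But this is at the same level of informality as the theorem statement itself and as the paper's appendix, which likewise interchanges limit and expectation without comment, so it is acceptable in context.
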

The $o_P$ term denotes the asymptotic behavior w.r.t. the bandwidth. Therefore, for a larger value of the bandwidth, the bias estimation becomes worse, as is illustrated in Figure~\ref{figure:simulations}.

\section{Main Result}
\begin{table}
	\begin{center}
		\begin{tabular}{||c c c c c||} 
			\hline
			Distribution & Density & $\Upsilon$ & $\mathcal{D}$ & $L_f$ \\ [0.5ex] 
			\hline\hline
			$\text{Laplace}(\mu, \lambda)$ & $\frac{1}{2\lambda}\exp{\left(-\frac{|x-\mu|}{\lambda}\right)}$& $(-\infty, +\infty)$ & $(-\infty, +\infty)$ & $\lambda^{-1}$  \\ 
			\hline
			$\text{Cauchy}(\mu; \gamma)$ & $\left(\pi\gamma + \pi\frac{(x-\mu)^2}{\gamma}\right)^{-1}$ & $(-\infty, +\infty)$ & $(-\infty, +\infty)$ & $\frac{2(z-\mu)}{\gamma^2+ (z-\mu)^2}$  \\
			\hline
			$\text{Uniform}(a, b)$ & $\begin{cases}
			\frac{1}{b-a} \quad \text{if} \quad a \leq x \leq b \\
			0 \quad \quad \text{otherwise}
			\end{cases}$ & $(a,b)$ & $(a,b)$ & 0 \\
			\hline
			$\text{Pareto}(\alpha)$ & $\begin{cases}
			\frac{\alpha}{x^{\alpha+1}} \quad \text{if}  x \geq 1 \\
			0 \quad \quad \text{otherwise}
			\end{cases}$ & $(1, +\infty)$ & $(1, +\infty)$ & $1+\alpha$ \\
			\hline
		    $\text{Normal}(\mu, \sigma)$ & $\frac{1}{\sqrt{2\pi\sigma^2}}\exp{-\frac{(x-\mu)^2}{2\sigma^2}}$ & $(-\infty, +\infty)$ & $(a, b)$ & $f_{\mu, \sigma}(a, b)$ \\
			\hline
		\end{tabular}
		
	\end{center}
	\caption{\label{table:distributions} \textsl{Examples of parameters to use for different univariate random design. Note that} $z = \frac{2\mu + \gamma^2 + \sqrt{4\mu + \gamma^4 + 2\mu\gamma^2 - 4(\gamma^2 + \mu + \gamma^2 \mu)}}{2}$ and in the case of the normal function one can find $f_{\mu, \sigma}(a,b) = \max_{y \in \{a, b\}} |\mathcal{N}(y | \mu, \sigma)|$.}
\end{table}
In this section we present two bounds on the bias of the Nadaraya--Watson estimator. The first one considers a bounded regression function $m$, and allows for local Lipschitz conditions on a subset of the design's support. The second bound instead does not require the regression function to be bounded but only the local Lipschitz continuity to hold on all of its support. The definition of ``local'' Lipschitz continuity will be given below. 

In order to develop our bound on the bias for multidimensional inputs, it is important to define some subset of the $\mathbb{R}^d$ space. 
More in detail we consider an open $n$-dimensional interval in $\mathbb{R}^d$ which is defined as $\Omega(\bm{\tau}^-, \bm{\tau}^+) \equiv (\tau_1^-, \tau_1^+) \times \dots  \times (\tau_d^-, \tau_d^+)$ where $\tau^-, \tau^+ \in \overline{\mathbb{R}}^d$.
We now formalize what is meant by weak (log-)Lipschitz continuity. This will prove useful as we need knowledge of the local-Lipschitz constants of $m$ and $\log f_X$ in our analysis. 
\begin{definition}{\emph{Weak Lipschitz continuity at $\xvec$ on the set $\mathcal{C}$ under the $L_1$-norm.\\}}
	Let $\mathcal{C} \subseteq \mathbb{R}^d$ and $f:\mathcal{C}\to \mathbb{R}$. We call $f$ weak Lipschitz continuous at $\xvec \in \mathcal{C}$ if and only if
	\begin{align}
		|f(\xvec) - f(\yvec)| \leq L|\xvec - \yvec|		\quad \forall \yvec \in \mathcal{C}, \nonumber 
	\end{align} 
	where $|\cdot|$ denotes the $L_1$-norm.
\end{definition}
\begin{definition}{\emph{Weak log-Lipschitz continuity at $\xvec$ on the set $\mathcal{C}$ under the $L_1$-norm.\\}}
	Let $\mathcal{C} \subseteq \mathbb{R}^d$. We call $f$ weak log-Lipschitz continuous at $\xvec$ on the set $\mathcal{C}$ if and only if
	\begin{align}
	|\log f(\xvec) - \log f(\yvec)| \leq L|\xvec - \yvec|		\quad \forall \yvec \in \mathcal{C} \nonumber .
	\end{align} 
	Note that the set $\mathcal{C}$ can be a subset of the function's domain.
\end{definition}
It is important to note that, in contrast to the global Lipschitz continuity, which requires 
$|f(\yvec) - f(\zvec)| \leq L |\yvec - \zvec|$ $\forall\yvec,\zvec \in \mathcal{C}$, the weak Lipschitz continuity is defined at a specific point $\xvec$ and therefore allows the function to be discontinuous elsewhere. 
In the following we list the set of assumptions that we use in our theorems.

\begin{axioms}
	\item \label{ax:domain} $f_X$ and $m$ are defined on $\Upsilon \equiv \Omega(\xvec - \upsilon^-, \xvec + \upsilon^+)$ and  $\upsilon^-, \upsilon^+ \in \overline{\mathbb{R}}_+^d$
	\item \label{ax:log-lipschitz} $f_X$ is log weak Lipschitz with constant $L_f$ at $\xvec$ on the set $\mathcal{D} \equiv \Omega(\xvec -\deltab^-,\xvec + \deltab^- ) \subseteq \Upsilon$ and $f_X(\xvec) \geq f_X(\zvec)$ $\forall \zvec \in \Upsilon\backslash\mathcal{D}$ with positive defined $\deltab^-, \deltab^+ \in \overline{\mathbb{R}}_+^d$ (note that this implies $f_X(\yvec) > 0$ $\forall \yvec \in \mathcal{D}$), 
	\item \label{ax:lipschitz} $m$ is weak Lipschitz with constant $L_m$ at $\xvec$ on a the set $\mathcal{G} \equiv \Omega(\xvec -\gammab^-, \xvec + \gammab^+) \subseteq \mathcal{D}$ with positive defined $\gammab^-, \gammab^+ \in \overline{\mathbb{R}}_+^d$,
\end{axioms}

In the following, we propose two different bounds of the bias. The first version considers a bounded regression function ($M < +\infty$), this allows both the regression function and the design to be weak Lipschitz on a subset of their domain. 
In the second version instead, we consider the case of unbounded regression function ($M = +\infty$) or when the bound $M$ is not known. In this case both the regression function and the design must be weak Lipschitz on the entire domain $\Upsilon$. 
\begin{theorem}{\emph{Bound on the Bias with Bounded Regression Function.}\\}
	\label{theo:bounded}
	Assuming \ref{ax:domain}--\ref{ax:lipschitz}, $\hvec \in \mathbb{R}^{d}_+$ a positive defined vector of bandwidths $\hvec = [h_1, h_2, \dots, h_n]^\intercal$, $K_h$ a multivariate Gaussian kernel defined on $\hvec$, $\hat{f}_n(\xvec)$ the Nadaraya--Watson kernel estimate using $n$ observations $\{\xvec_i, y_i\}_{i=1}^n$  with $x_i \sim f_X$, $y_i = m(\xvec_i) + \epsilon_i$ and with noise $\epsilon_i \sim \varepsilon(\xvec_i)$ centered in zero ($\EV[\varepsilon(\xvec_i)] = 0$), $n\to \infty$, and furthermore assuming there is a constant $0\leq M < +\infty$  such that $|m(\yvec) - x(\zvec)| \leq M$ $\forall \yvec, \zvec \in \Upsilon$, the considered Nadaraya--Watson kernel regression bias results to be bounded by 
	\begin{align}
	& \bigg|\EV\Big[\lim_{n\to \infty}\approxx{f}_n(\xvec) \Big] - m(\xvec)\bigg|\nonumber \\
	& \leq \frac{\sum\limits_{k=1}^d {\xi_A}_k \prod\limits_{i\neq k}^d  \zeta(h_i, -\phi_i^-, \phi_i^+)+ M\left(\prod\limits_{i=1}^d\zeta(-\gamma_i^-, \gamma_i^+)  -  \prod\limits_{i=1}^d\zeta(-\phi_i^-, \phi_i^+)+ \prod\limits_{i=1}^d2{\xi_C}_i \right)}{\prod_{i=1}^d \Psi(L_f, h_i, -\delta_i^-, \delta_i^+)} \nonumber
	\end{align}
	where \begin{align}
	& {\xi_A}_k  = \frac{\sqrt{2}L_m h_k}{\sqrt{\pi}} \left(2 - \phi(\phi_k^+, -L_f, h_k)- \phi(-\phi_k^-, L_f, h_k)\right) - L_mL_fh_k^2\zeta(h_k, -\phi_k^-, \phi_k^+) \nonumber, \\
	& \zeta(h, \tau^-, \tau^+) = \e{\frac{L_f^2 h^2}{2}}\left( 2 \varphi(0, L_f, h) - \varphi(-\tau^+, L_f, h) - \varphi(\tau^-, L_f, h)\right) \nonumber,\\
	& \Psi(L, h, \tau^-, \tau^+) = e^{\frac{L^2h^2}{2}}\left(\varphi(\tau^+, L, h) - \varphi(\tau^-, L, h)\right) \nonumber, \\
	& \varphi(l, L, h) = \lim_{g\to l}\erf\left(\frac{g + h^2L}{h\sqrt{2}} \right),  
	\phi(l, L, h) = \lim_{g\to l} e^{-\frac{g^2}{2h^2} - g L} \nonumber
	\end{align}
	$ \phi_i^-=-M/L_m, \phi_i^+=M/L_m$ and $\erf$ is the error function.
\end{theorem}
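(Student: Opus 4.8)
The plan is to remove the sampling randomness by passing to $n\to\infty$, reduce the bias to a deterministic ratio of integrals, and then bound the numerator from above and the denominator from below by exploiting the product structure of the Gaussian kernel together with the $L_1$ geometry of the weak-Lipschitz assumptions. By the law of large numbers the normalized numerator and denominator of $\approxx{f}_n(\xvec)$ converge almost surely to their expectations, and because $\EV[\varepsilon(\xvec_i)]=0$ the noise drops out of the numerator, so the limit is the deterministic ratio
\[
\lim_{n\to\infty}\approxx{f}_n(\xvec)
=\frac{\int_\Upsilon K_{\hvec}(\xvec-\zvec)\,m(\zvec)\,f_X(\zvec)\de\zvec}
{\int_\Upsilon K_{\hvec}(\xvec-\zvec)\,f_X(\zvec)\de\zvec},
\]
and the outer expectation is vacuous. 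Subtracting $m(\xvec)$ and moving it inside the integral gives
\[
\EV\big[\lim_{n\to\infty}\approxx{f}_n(\xvec)\big]-m(\xvec)
=\frac{\int_\Upsilon K_{\hvec}(\xvec-\zvec)\big(m(\zvec)-m(\xvec)\big)f_X(\zvec)\de\zvec}
{\int_\Upsilon K_{\hvec}(\xvec-\zvec)\,f_X(\zvec)\de\zvec},
\]
so that by the triangle inequality it suffices to bound the modulus of the numerator from above and the denominator from below.

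For the denominator I would restrict the integral to $\mathcal{D}$ (permissible since the integrand is non-negative) and invoke the log-weak-Lipschitz assumption \ref{ax:log-lipschitz}, which yields the pointwise lower bound $f_X(\zvec)\ge f_X(\xvec)\,\e{-L_f|\xvec-\zvec|}$ on $\mathcal{D}$. Because the kernel factorizes as $\prod_i K_{h_i}(x_i-z_i)$ and $|\xvec-\zvec|=\sum_i|x_i-z_i|$ under the $L_1$-norm, the exponential weight factorizes as well, so the bound separates into a product of one-dimensional integrals $\int_{-\delta_i^-}^{\delta_i^+}K_{h_i}(u)\,\e{-L_f|u|}\de u$; completing the square in each Gaussian-times-exponential integrand and integrating against the error function produces exactly $f_X(\xvec)\prod_i\Psi(L_f,h_i,-\delta_i^-,\delta_i^+)$. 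The prefactor $f_X(\xvec)$ cancels against an identical factor arising in the numerator, which is why it is absent from the final statement.

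For the numerator I would bound the increment $|m(\zvec)-m(\xvec)|$ and the density $f_X(\zvec)$ pointwise and then partition $\Upsilon$. Two increment estimates are available: the weak-Lipschitz bound $|m(\zvec)-m(\xvec)|\le L_m|\xvec-\zvec|$, valid on $\mathcal{G}$ by \ref{ax:lipschitz}, and the global bound $|m(\zvec)-m(\xvec)|\le M$. Writing $\phi_i^-=-M/L_m$ and $\phi_i^+=M/L_m$, I would retain the Lipschitz increment on the corresponding box and use the constant $M$ on its complement, while bounding the density by $f_X(\zvec)\le f_X(\xvec)\e{L_f|\xvec-\zvec|}$ on $\mathcal{D}$ and by the uniform bound $f_X(\zvec)\le f_X(\xvec)$ on $\Upsilon\setminus\mathcal{D}$, again pulling out $f_X(\xvec)$. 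The Lipschitz contribution factorizes, but now the $L_1$ weight $L_m\sum_k|x_k-z_k|$ turns the product integral into a sum: integrating the extra weight $|x_k-z_k|$ against $K_{h_k}\e{L_f|\cdot|}$ in dimension $k$ (which, because of the algebraic weight, generates both an error-function term and a bare-exponential term, hence the appearance of both $\varphi$ and $\phi$ inside ${\xi_A}_k$) while leaving a plain $\zeta$ factor in every other dimension yields $\sum_k{\xi_A}_k\prod_{i\ne k}\zeta(h_i,-\phi_i^-,\phi_i^+)$. The constant-$M$ contribution, evaluated on the complement of the $\phi$-box within $\Upsilon$ with the appropriate density bound on each piece, assembles into $M\big(\prod_i\zeta(h_i,-\gamma_i^-,\gamma_i^+)-\prod_i\zeta(h_i,-\phi_i^-,\phi_i^+)+\prod_i 2{\xi_C}_i\big)$, the $\zeta$-difference accounting for the log-Lipschitz region and the $\xi_C$-product for the tail on which only the uniform density bound is available. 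Dividing the numerator bound by the denominator bound gives the claimed inequality.

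The step I expect to be the main obstacle is the multidimensional bookkeeping in the numerator. Unlike the denominator, the regression increment is a sum rather than a product, and the region on which the Lipschitz bound is preferable to the constant bound is naturally a polytope that must be reconciled with the box-shaped regions $\mathcal{G}$, $\mathcal{D}$ and the $\phi$-box used everywhere else (so that the decomposition presumes the nesting $\phi\text{-box}\subseteq\mathcal{G}\subseteq\mathcal{D}\subseteq\Upsilon$). Assembling the per-dimension closed forms into the stated combination of $\xi_A$, $\zeta$ and the $M$-weighted difference of products---while correctly routing each sub-region to the right density bound, $\e{L_f|\cdot|}$ on $\mathcal{D}$ versus the uniform bound beyond it---is where the care lies. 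By contrast, each individual one-dimensional integral is a routine completion-of-the-square evaluation that reduces to the error function and to the exponential terms $\varphi$ and $\phi$.
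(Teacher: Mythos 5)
Your proposal follows essentially the same route as the paper's proof: the law-of-large-numbers reduction to a deterministic ratio of integrals, the pointwise bounds $f_X(\xvec)e^{-L_f|\xvec-\zvec|}\le f_X(\zvec)\le f_X(\xvec)e^{L_f|\xvec-\zvec|}$ with cancellation of $f_X(\xvec)$, the decomposition of the numerator into the $\phi$-box (Lipschitz increment), $\mathcal{G}$ minus the $\phi$-box (constant $M$ with the $\zeta$-difference), and the tail $\xi_C$ term, together with the sum-over-coordinates handling of the $L_1$ weight that produces $\sum_k {\xi_A}_k\prod_{i\neq k}\zeta$. The only slip is normalization bookkeeping: each one-dimensional integral evaluates to $2^{-1}\Psi$ (respectively $2^{-1}\zeta$), so the denominator bound is $f_X(\xvec)\,2^{-d}\prod_i \Psi(L_f,h_i,-\delta_i^-,\delta_i^+)$ rather than $f_X(\xvec)\prod_i\Psi$, but these factors cancel in the ratio exactly as in the paper and your final bound matches the theorem.
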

In the case where $M$ is unknown or infinite, we propose the following bound.
\begin{theorem}{\emph{Bound on the Bias with Unbounded Regression Function.}\\}
	\label{theo:unbounded}
	Assuming \ref{ax:domain}--\ref{ax:lipschitz}, $\hvec \in \overline{\mathbb{R}}_+^{d}$ a positive defined vector of bandwidths $\hvec = [h_1, h_2, \dots, h_n]^\intercal$, $K_h$ a multivariate Gaussian kernel defined on $\hvec$, $\hat{f}_n(\xvec)$ the Nadaraya--Watson kernel estimate using $n$ observations $\{\xvec_i, y_i\}_{i=1}^n$  with $x_i \sim f_X$, $y_i = m(\xvec_i) + \epsilon_i$ and with noise $\epsilon_i \sim \varepsilon(\xvec_i)$ centered in zero ($\EV[\varepsilon(\xvec_i)] = 0$), $n\to \infty$, and furthermore assuming that $\Upsilon \equiv \mathcal{D} \equiv \mathcal{G}$, the considered Nadaraya--Watson kernel regression bias results to be bounded by
	\begin{align}
	   & \bigg|\EV\Big[\lim_{n\to \infty}\approxx{f}_n(\xvec) \Big] - m(\xvec)\bigg|  \leq \frac{\sum\limits_{k=1}^d {\xi_A}_k\prod\limits_{i\neq k}^d  \zeta(h_i, -\upsilon_i^-, \upsilon_i^+)}{\prod_{i=1}^d \Psi(L_f, h_i, -\upsilon_i^-, \upsilon_i^+)} \nonumber
	   \end{align}
	   where $\xi_{A_k}, \zeta, \Psi, \varphi$ are defined as in Theorem~\ref{theo:bounded} and $\phi_i^-=\upsilon_i^-, \phi_i^+=\upsilon_i^+$.
\end{theorem}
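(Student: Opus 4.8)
The plan is to pass to the infinite-sample limit first, turning the random estimator into a deterministic ratio of integrals, and then to bound numerator and denominator separately using the two weak-Lipschitz hypotheses. Concretely, I would apply the law of large numbers to the numerator and denominator of $\approxx{f}_n(\xvec)$ separately: since the $\{\xvec_i,y_i\}$ are i.i.d.\ and the noise is zero-mean, $\tfrac1n\sum_i K_{\hvec}(\xvec-\xvec_i)y_i\to\EV_X[K_{\hvec}(\xvec-X)m(X)]$ and $\tfrac1n\sum_j K_{\hvec}(\xvec-\xvec_j)\to\EV_X[K_{\hvec}(\xvec-X)]$, so that
\begin{align}
\EV\Big[\lim_{n\to\infty}\approxx{f}_n(\xvec)\Big]-m(\xvec)=\frac{\int_{\Upsilon} K_{\hvec}(\xvec-\svec)\big(m(\svec)-m(\xvec)\big)f_X(\svec)\de\svec}{\int_{\Upsilon} K_{\hvec}(\xvec-\svec)f_X(\svec)\de\svec}, \nonumber
\end{align}
where I have used $\int K_{\hvec}m(\xvec)f_X=m(\xvec)\int K_{\hvec}f_X$ to centre the numerator. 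In this unbounded setting the hypothesis $\Upsilon\equiv\mathcal{D}\equiv\mathcal{G}$ guarantees that both weak-Lipschitz properties hold on the whole integration domain, so no subregion has to be treated separately and the truncation limits are $\upsilon_i^\pm$ in every dimension, which is why $\phi_i^\pm$ specialises to $\upsilon_i^\pm$.

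Next I would bound the two integrals. Taking absolute values and using the triangle inequality, the numerator is at most $\int_{\Upsilon}K_{\hvec}(\xvec-\svec)\,|m(\svec)-m(\xvec)|\,f_X(\svec)\de\svec$; the weak Lipschitz continuity of $m$ at $\xvec$ (Assumption~\ref{ax:lipschitz}) gives $|m(\svec)-m(\xvec)|\le L_m|\svec-\xvec|$, and the weak log-Lipschitz continuity of $f_X$ (Assumption~\ref{ax:log-lipschitz}) gives the two-sided envelope $f_X(\xvec)\e{-L_f|\svec-\xvec|}\le f_X(\svec)\le f_X(\xvec)\e{L_f|\svec-\xvec|}$. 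Upper-bounding $f_X$ by $f_X(\xvec)\e{L_f|\svec-\xvec|}$ in the numerator and lower-bounding it by $f_X(\xvec)\e{-L_f|\svec-\xvec|}$ in the denominator, the common positive factor $f_X(\xvec)$ cancels in the ratio and I am left with a purely analytic quotient of Gaussian integrals.

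The key structural step is to exploit that the Gaussian kernel factorises, $K_{\hvec}(\xvec-\svec)=\prod_{i=1}^d K_{h_i}(x_i-s_i)$, and that the $L_1$-norm is additive, $|\svec-\xvec|=\sum_{k=1}^d|s_k-x_k|$, so that $\e{\pm L_f|\svec-\xvec|}=\prod_i\e{\pm L_f|s_i-x_i|}$. After the substitution $u_i=s_i-x_i$ the denominator becomes $\prod_i\int_{-\upsilon_i^-}^{\upsilon_i^+}K_{h_i}(u_i)\e{-L_f|u_i|}\de u_i$, while the factor $L_m\sum_k|u_k|$ turns the numerator into $L_m\sum_k\big(\int K_{h_k}(u_k)|u_k|\e{L_f|u_k|}\de u_k\big)\prod_{i\ne k}\big(\int K_{h_i}(u_i)\e{L_f|u_i|}\de u_i\big)$. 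Each one-dimensional integral is then evaluated by completing the square in the Gaussian exponent and splitting the absolute value at the origin: the pure exponential integrals produce the error-function expressions $\varphi$ that assemble into $\zeta$ (the numerator off-diagonal factors) and $\Psi$ (the denominator factors), whereas the $|u_k|$-weighted integral produces, in addition, the Gaussian endpoint terms $\phi$ arising from the antiderivative of $u\,\e{-u^2/2h^2}$; together these give exactly ${\xi_A}_k$. Substituting the closed forms and collecting the products then yields the stated bound.

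The main obstacle I anticipate is the evaluation and sign bookkeeping of the $|u_k|$-weighted tilted-Gaussian integral $\int K_{h_k}(u)|u|\e{L_f|u|}\de u$: completing the square here leaves both an $\erf$-type contribution (which must be matched to $\zeta$) and an exponential endpoint contribution (the $\phi$ terms), and each must be tracked separately on $(-\upsilon_k^-,0)$ and $(0,\upsilon_k^+)$ before being recombined. Handling the possibly infinite truncation limits $\upsilon_i^\pm$ through the $\lim_{g\to l}$ appearing in the definitions of $\varphi$ and $\phi$, and verifying that the per-dimension Gaussian normalisation constants cancel consistently between the numerator sum and the denominator product, is the other place where care is required; the remaining algebra is routine.
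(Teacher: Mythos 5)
Your proposal is correct and follows essentially the same route as the paper's proof: pass to the infinite-sample limit to get the ratio of kernel-weighted integrals, centre the numerator by $m(\xvec)$, apply the weak log-Lipschitz envelope $f_X(\xvec)\e{-L_f|\svec-\xvec|}\le f_X(\svec)\le f_X(\xvec)\e{L_f|\svec-\xvec|}$ (the paper's Proposition~\ref{prop:loglipschitz}) upward in the numerator and downward in the denominator so that $f_X(\xvec)$ cancels, factorise over dimensions (Propositions~\ref{prop:indipendentfactorization} and~\ref{prop:intprodsum}), and evaluate the one-dimensional tilted-Gaussian integrals (Propositions~\ref{prop:int1}--\ref{prop:int4}) to assemble $\zeta$, $\Psi$ and ${\xi_A}_k$, with the specialisation $\Upsilon\equiv\mathcal{D}\equiv\mathcal{G}$ eliminating the $M$-terms and forcing $\phi_i^\pm=\upsilon_i^\pm$, exactly as the paper does. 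The only divergence is in your favour: you retain the two-sided tilt $\e{-L_f|u_i|}$ in the denominator, which strictly evaluates to $2^{-1}\bigl(\Psi(L_f,h_i,0,\upsilon_i^+)+\Psi(-L_f,h_i,-\upsilon_i^-,0)\bigr)$ rather than to the single factor $2^{-1}\Psi(L_f,h_i,-\upsilon_i^-,\upsilon_i^+)$ appearing in the statement, whereas the paper silently writes $\e{-|\lvec|L_f}=\prod_i\e{-l_iL_f}$ in its denominator chain---so carrying your bookkeeping through faithfully gives a valid (slightly weaker) bound where the paper's equality step is not exact.
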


The proof of both theorems is detailed in the Supplementary Material.
Note that the conditions required by our theorems are mild and they allow a wide range of random designs, including and not limited to Gaussian, Cauchy, Pareto, Uniform and Laplace distributions. In general every continuously differentiable density distribution is also weak log-Lipschitz in some closed subset of its domain. For example, the Gaussian distribution does not have a finite Lipschitz constant on its entire domain, but on any closed interval, there is a finite weak Lipschitz constant. Examples of densities that are weak log-Lipschitz are presented in Table~\ref{table:distributions}.

\section{Simulations}
\begin{figure}[p]
	\includegraphics[width=\linewidth]{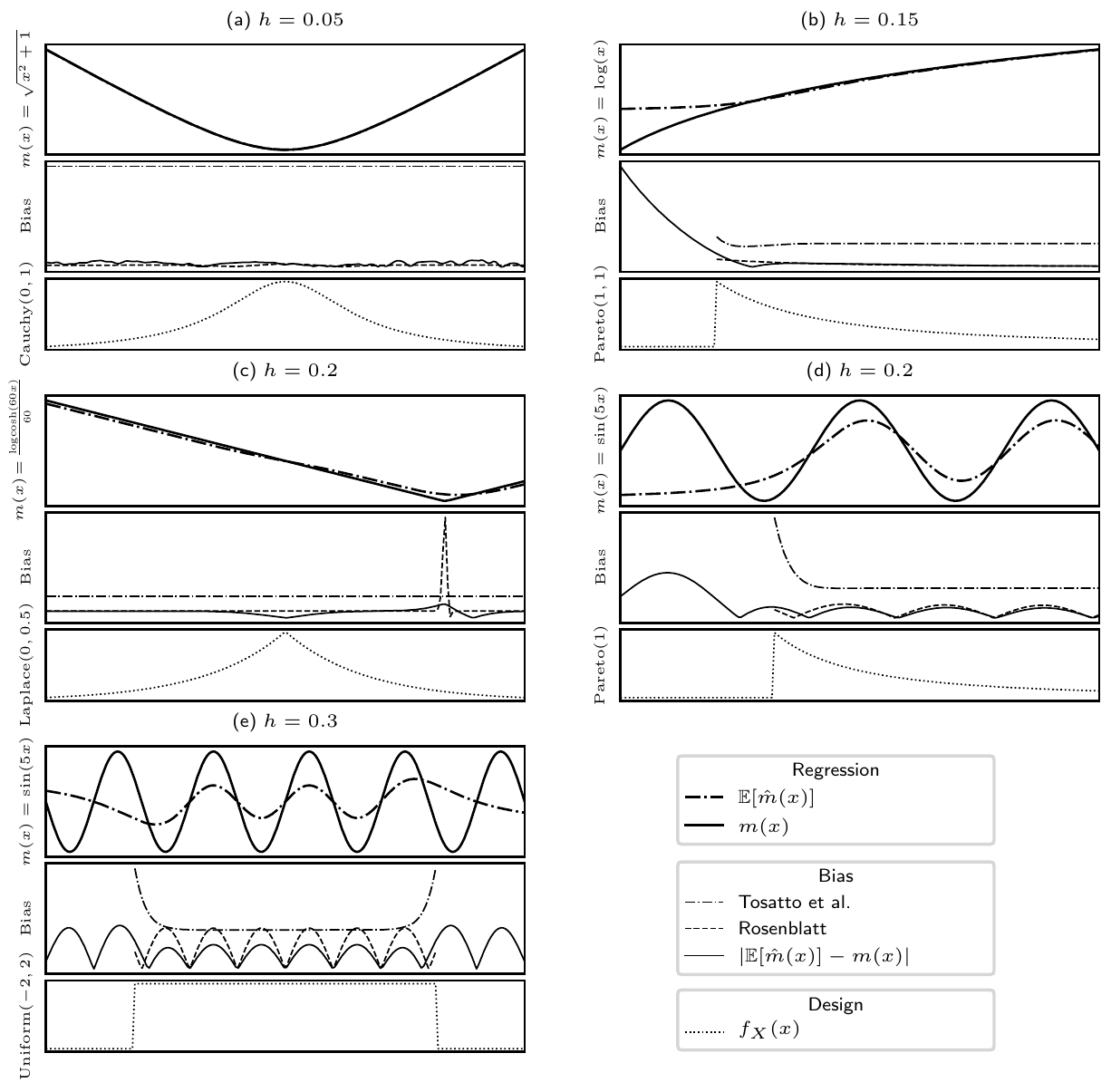}  
	\caption{\label{figure:simulations} 
		\textsl{We propose some simulations of Nadaraya--Watson regression with different designs, regression functions and bandwidths.
			The regression function $m(x)$ is represented with a solid line, while the Nadaraya--Watson estimate $\hat{m}(x)$ is represented with a dash-dotted line in the top subplot of each experiment. 
			In the second subplots, it is possible to observe the true bias (solid line), as well as our upper bound (dashed line) and the Rosenblatt's estimate (dash-dotted line). 
			In the bottom subplots depict the design used.
			The bandwidth used for the estimation is denoted with $h$.
			It is possible to observe that the Rosenblatt's estimate  often under or over estimates the bias. In all the different test conditions, our method correctly upper bounds the bias.}
	}
\end{figure}

In this section we provide a numerical analysis of our bounds on the bias.
We test our method on uni-dimensional input spaces for display purposes. We select a set of regression functions with different Lipschitz constants and different bounds,
\begin{itemize}
	\item $y = \sin(5x)$; $L_m = 5$ and $M=1$,
	\item $y = \log x$ which for $\mathcal{G} \equiv (-1, +\infty)$ has $L_m = 1$ and $M=+\infty$,
	\item $y = 60^{-1} \log\cosh 60 x $ which has $L_m=1$, is unbuounded, and has a particularly high second derivative in $x=0$, with $m''(0) = 60$,
	\item $y = \sqrt{x^2 + 1}$ which has $L_m = 1$ and is unbounded.  
\end{itemize}
In order to provide as many different scenarios as possible we also used the distributions from Table~\ref{table:distributions}, using therefore both infinite domain distributions, such as \textsl{Cauchy} and \textsl{Laplace}, and finite domain such as \textsl{Uniform}. 
In order to numerically estimate the bias, we approximate $E[\hat{m}_n(x)]$ with an ensemble of estimates $N^{-1}\sum_{j=1}^N\hat{m}_{n, j}(x)$ where each estimate $\hat{m}_{n,j}$ is built on a different dataset (drawn from the same distribution $f_X$). 
In order to ``simulate'' $n\to \infty$ we used $n=10^5$ samples. 

In this section we provide some simulations of our bound presented in Theorem~\ref{theo:bounded} and Theorem~\ref{theo:unbounded}, and for the Rosenblatt's case we use
\begin{equation}
	 \left| h_n^2\left(\frac{1}{2}m''(x) + \frac{m'(x)f_X'(x)}{f_X(x)} \right)\int u^2 K(u)\de u \right|\nonumber .
\end{equation}
Since the Rosenblatt's bias estimate is not an upper bound, it can happen that the true bias is higher (as well as lower) than this estimate, as it is possible to see in Figure~\ref{figure:simulations}. 
We presented different scenarios, both with bounded and unbounded functions, infinite and finite design domains, and with larger or smaller choice of bandwidths.
It is possible to observe that, thanks to the knowledge of $f, f', m', m''$ the Rosenblatt's estimation of the bias tends to be more accurate than our bound, however it can happen that it largely overestimate the bias, like in the case of $m(x) = 60^{-1}\log\cosh(60 x)$ in $x=0$ or to underestimate it, most often in boundary regions. 
In contrast, our bound always overestimate the true bias, and despite its lack of knowledge of $f, f', m', m''$, it is most often tight. Moreover, when the bandwidth is small, both our method and Rosenblatt's  deliver an accurate estimation of the bias.
In general, Rosenblatt tends to deliver a better estimate of the bias, but it does not behave as a bound, and in some situations it also can deliver larger mispredictions.
In detail, the plot (a) in Figure~\ref{figure:simulations} shows that, with a tight bandwidth both our method and Rosenblatt's method achieve good approximations of the bias, but only our method correctly upper bounds the bias.
When increasing the bandwidth, we obtain both a larger bias and subsequent larger estimates of the bias. Our method consistently upper bounds the bias, while in many cases Rosenblatt's method under estimates it, especially in proximity of boundaries (subplots b, d, e).
An interesting case can be observed in subplot (c), where we test the function $m(x) = 60^{-1}\log\cosh(60 x)$, which has high second order derivative in $x=0$: in this case, Rosenblatt's method largely overestimates the bias. 
The figure shows that our bound is able to deal with different functions and random designs, being reasonably tight, if compared to the Rosenblatt's estimation which requires the knowledge of the regression function and the design, and respective derivatives.


\section{Acknowledgment}
The research is financially supported by the Bosch-Forschungsstiftung program. 


\bibliographystyle{biometrika}
\bibliography{zotero}
\newpage
\appendix
\section{Appendix}
In order to give the proof of the stated Theorems, we deen to introduce some quantities and to state some facts that will be used in our proofs.  
%
%
%
%
%
%

\begin{definition}{Multivariate Gaussian Kernel.\\}
	\label{def:kernel}
	We define the multivariate Gaussian Kernel with bandwidth $\hvec \in \mathbb{R}$ as
	\begin{equation}
	K_{\hvec}(\xvec - \zvec) = \prod_{i=1}^d\frac{\phi(x_i-z_i, 0, h_i)}{\sqrt{2\pi h_i^2}} \nonumber
	\end{equation}
\end{definition}

\begin{definition}{\emph{Integral on a $d$-interval}\\}
	\label{def:orthopodintegration}
	Let $\mathcal{C} \equiv \Omega(\bm{\tau}^-, \bm{\tau}^+)$ with $\bm{\tau}^-, \bm{\tau}^+ \in \overline{\mathbb{R}}^d$. Let the integral of a function $f:\mathcal{C} \to \mathbb{R}$ defined on $\mathcal{C}$ be defined as
	\begin{equation}
	\int_{\mathcal{C}} f(\xvec) \de \xvec = \int_{\tau_1^-}^{\tau_1^+}\int_{\tau_2^-}^{\tau_2^+}\dots \int_{\tau_d^-}^{\tau_d^+}f([x_1, x_2, \dots, x_d]^\intercal) \de x_d \dots \de x_2 \de x_1. \nonumber
	\end{equation}
\end{definition}

\begin{proposition}{}
	\label{prop:loglipschitz}
	There is a function $g:\Upsilon \to \mathbb{R}$ such that
	\begin{equation}
	f_X(\xvec) = \frac{\e{g(\xvec)}}{\int_\Upsilon \e{g(\xvec)}\de \xvec}\nonumber 
	\end{equation}
	and 
	\begin{equation}
	|g(\xvec) - g(\yvec)| \leq L_f |\xvec - \yvec| \quad \forall  \yvec \in \mathcal{D}. \nonumber 
	\end{equation}
\end{proposition}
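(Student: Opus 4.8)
The plan is to take $g$ to be the logarithm of the design density, $g(\xvec) = \log f_X(\xvec)$, which is the canonical exponential form of a density and immediately converts the log-weak-Lipschitz hypothesis into a Lipschitz statement about $g$. With this choice the two claims decouple cleanly: the exponential representation of $f_X$ reduces to a normalization check, while the Lipschitz estimate becomes a verbatim translation of Assumption~\ref{ax:log-lipschitz}. Concretely, I would first observe that by the definition of weak log-Lipschitz continuity, Assumption~\ref{ax:log-lipschitz} asserts precisely $|\log f_X(\xvec) - \log f_X(\yvec)| \le L_f|\xvec-\yvec|$ for every $\yvec \in \mathcal{D}$; substituting $g = \log f_X$ this is exactly $|g(\xvec)-g(\yvec)| \le L_f|\xvec-\yvec|$ for all $\yvec\in\mathcal{D}$, which is the second claim. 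This inequality only involves points of $\mathcal{D}$, where Assumption~\ref{ax:log-lipschitz} guarantees $f_X>0$ so that $\log f_X$ is finite and the estimate is meaningful.

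Next I would verify the representation. Substituting $g=\log f_X$ gives $\e{g(\xvec)}=f_X(\xvec)$ pointwise, hence $\frac{\e{g(\xvec)}}{\int_\Upsilon \e{g(\zvec)}\de\zvec} = \frac{f_X(\xvec)}{\int_\Upsilon f_X(\zvec)\de\zvec}$. Since Assumption~\ref{ax:domain} takes $\Upsilon$ to be the domain on which $f_X$ lives, so that $\Upsilon$ contains the support of the design, the denominator equals $\int_\Upsilon f_X\de\zvec = 1$ and the ratio collapses to $f_X(\xvec)$, as required. I would also note that $g$ is determined only up to an additive constant, since replacing $g$ by $g+c$ scales both the numerator and the normalizing integral by $e^c$; this freedom is reassuring because in the subsequent proofs $f_X$ enters the Nadaraya--Watson limit through a ratio in which any normalizing factor cancels, so in fact only the proportionality $f_X\propto \e{g}$ is ever used.

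The one technical obstacle is the behaviour of $g=\log f_X$ at points of $\Upsilon\setminus\mathcal{D}$ where $f_X$ may vanish, at which $\log f_X=-\infty$ and the stated codomain $\mathbb{R}$ is formally violated. I would resolve this with the convention $\e{-\infty}=0$ (equivalently, defining $g$ on $\{f_X>0\}$ and extending $\e{g}$ by zero), which preserves both assertions: the representation still holds because $\e{g}$ continues to agree with $f_X$ at such points, and the Lipschitz bound is untouched because it is quantified only over $\yvec\in\mathcal{D}$, where positivity of $f_X$ is guaranteed by Assumption~\ref{ax:log-lipschitz}. This bookkeeping around the zeros of $f_X$, together with the harmless reliance on $\int_\Upsilon f_X = 1$, is really the only delicate point; the construction itself is immediate.
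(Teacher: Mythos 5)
Your construction $g=\log f_X$ is exactly the intended argument: the paper states Proposition~\ref{prop:loglipschitz} without an explicit proof, and both claims follow verbatim from Assumption~\ref{ax:log-lipschitz} together with the normalization $\int_\Upsilon f_X(\zvec)\de\zvec = 1$, just as you argue. Your additional care about points of $\Upsilon\setminus\mathcal{D}$ where $f_X$ may vanish (so that $g:\Upsilon\to\mathbb{R}$ is formally violated) addresses a detail the paper silently glosses over, and your resolution is sound since the Lipschitz bound is only quantified over $\mathcal{D}$, where positivity holds.
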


\begin{proposition}{\emph{Independent Factorization\\}}
	\label{prop:indipendentfactorization}
	Let $\mathcal{C} \equiv \Omega(\taub^-, \taub^+)$ where $\taub^-, \taub^+ \in \mathbb{R}^d$, and $f_i:\mathbb{R}\to \mathbb{R}$, 
	\begin{align}
		\int_{\mathcal{C}} \prod\limits_{i=1}^d f_i(x_i) \de =  \xvec = \prod\limits_{i=1}^d \int_{\mathcal{C}} f_i(x_i) \de \nonumber .
	\end{align}
\end{proposition}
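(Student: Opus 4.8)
The plan is to reduce the multidimensional integral to an iterated one-dimensional integral via Definition~\ref{def:orthopodintegration}, and then to peel off the integration variables one at a time, using the elementary fact that a factor not depending on the current variable of integration is constant with respect to it and can therefore be pulled outside. Concretely, Definition~\ref{def:orthopodintegration} rewrites the left-hand side as the iterated integral $\int_{\tau_1^-}^{\tau_1^+}\cdots\int_{\tau_d^-}^{\tau_d^+}\prod_{i=1}^d f_i(x_i)\,\de x_d\cdots\de x_1$, so that the whole statement becomes a claim about separating this nested integral into a product of scalar integrals.

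I would proceed by induction on the dimension $d$. The base case $d=1$ is an identity. For the inductive step, I would isolate the innermost integral over $x_d$: since each $f_i(x_i)$ with $i\neq d$ does not depend on $x_d$, the product $\prod_{i=1}^{d-1} f_i(x_i)$ is constant with respect to $x_d$ and factors out of $\int_{\tau_d^-}^{\tau_d^+}(\cdot)\,\de x_d$ by linearity, leaving $\big(\prod_{i=1}^{d-1} f_i(x_i)\big)\int_{\tau_d^-}^{\tau_d^+} f_d(x_d)\,\de x_d$. The surviving one-dimensional integral is a real number, hence constant with respect to all of $x_1,\dots,x_{d-1}$, so it pulls out of the remaining $d-1$ integrations. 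Applying the inductive hypothesis to the remaining $(d-1)$-fold iterated integral of $\prod_{i=1}^{d-1} f_i(x_i)$ then yields $\prod_{i=1}^{d} \int_{\tau_i^-}^{\tau_i^+} f_i(x_i)\,\de x_i$, which is exactly the right-hand side, with the abbreviated notation of the statement understood as integration of each $f_i$ over its own interval $(\tau_i^-,\tau_i^+)$.

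The only genuine subtlety, rather than a real obstacle, is integrability: the successive ``pull the constant out'' steps are justified purely by linearity, but the passage from the iterated integral to the product form implicitly requires each $f_i$ to be absolutely integrable on its interval $(\tau_i^-, \tau_i^+)$, so that every intermediate integral is finite and the Fubini--Tonelli theorem legitimizes the manipulation. In the applications of this paper the $f_i$ are Gaussian-type factors for which this always holds, so I would carry this integrability along as a standing hypothesis, after which the argument is the elementary bookkeeping above.
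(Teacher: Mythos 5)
Your proof is correct, but there is nothing in the paper to compare it against: the authors state Proposition~\ref{prop:indipendentfactorization} without any proof, treating it as a standard fact, so your elementary induction on $d$ --- pulling the $x_d$-independent product out of the innermost integral, noting the surviving scalar integral is constant in the remaining variables, and invoking the inductive hypothesis --- supplies exactly the routine argument the paper omits. One refinement: since the paper \emph{defines} the multidimensional integral as an iterated integral (Definition~\ref{def:orthopodintegration}), your appeal to Fubini--Tonelli is stronger than needed; no interchange of the order of integration ever takes place, so the factorization requires only that each one-dimensional integral $\int_{\tau_i^-}^{\tau_i^+} f_i(x_i)\,\de x_i$ exists and is finite (or that the $f_i$ are nonnegative, as they are for the Gaussian-type integrands in Propositions~\ref{prop:int1}--\ref{prop:int4} where this proposition is actually applied). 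You also read the statement correctly through its typographical slips: the stray ``$=\xvec=$'' and the right-hand side written as $\int_{\mathcal{C}} f_i(x_i)\,\de$ must be understood, as you do, as integration of each $f_i$ over its own interval $(\tau_i^-,\tau_i^+)$, which is confirmed by how the proposition is used in the proof of Theorem~\ref{theo:bounded}.
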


\begin{proposition}
	\label{prop:int1}
	Given $\tau^-, \tau^+ \in \mathbb{R}, h>0$,
	\begin{equation}
		\bigintss_{\tau^-}^{\tau^+} \frac{\e{-\frac{l^2}{2h^2} - l L_f}}{\sqrt{2\pi h^2}}\de l  = 2^{-1}\Psi(L_f, h, \tau^-, \tau^+) \nonumber .
	\end{equation}
\end{proposition}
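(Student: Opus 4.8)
The plan is to evaluate this Gaussian-type integral in closed form by completing the square and matching the outcome term-by-term to the definition of $\Psi$. First I would rewrite the exponent as $-\frac{l^2}{2h^2} - l L_f = -\frac{(l + h^2 L_f)^2}{2h^2} + \frac{h^2 L_f^2}{2}$. The second summand is constant in $l$, so $e^{h^2 L_f^2/2}$ factors out of the integral; this is precisely the prefactor $e^{L_f^2 h^2/2}$ appearing in the definition of $\Psi(L_f, h, \tau^-, \tau^+)$. What remains inside is a centred, unit-weight Gaussian density $\frac{1}{\sqrt{2\pi h^2}}\e{-\frac{(l+h^2 L_f)^2}{2h^2}}$ integrated over $[\tau^-, \tau^+]$.

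Next I would apply the linear change of variables $u = (l + h^2 L_f)/(h\sqrt{2})$, whence $\de l = h\sqrt{2}\,\de u$ and the integrand collapses to $\frac{1}{\sqrt{\pi}}\e{-u^2}$. The endpoints $l = \tau^+$ and $l = \tau^-$ map respectively to $u = (\tau^+ + h^2 L_f)/(h\sqrt{2})$ and $u = (\tau^- + h^2 L_f)/(h\sqrt{2})$, which are exactly the arguments of $\erf$ inside $\varphi(\tau^+, L_f, h)$ and $\varphi(\tau^-, L_f, h)$. Invoking the standard identity $\int_a^b \frac{1}{\sqrt{\pi}}\e{-u^2}\,\de u = \frac{1}{2}\big(\erf(b) - \erf(a)\big)$, which follows directly from $\erf(z) = \frac{2}{\sqrt{\pi}}\int_0^z \e{-t^2}\,\de t$, and recombining with the extracted factor $e^{h^2 L_f^2/2}$, I recover exactly $\tfrac{1}{2}e^{L_f^2 h^2/2}\big(\varphi(\tau^+, L_f, h) - \varphi(\tau^-, L_f, h)\big) = \tfrac{1}{2}\Psi(L_f, h, \tau^-, \tau^+)$.

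There is essentially no genuine obstacle here: the argument is a routine completion of the square followed by an affine substitution, and every constant lands where the definition of $\Psi$ anticipates it. The only point deserving a word of care is the limit $\lim_{g \to l}$ in the definition of $\varphi$. Because the proposition restricts $\tau^-, \tau^+$ to finite reals, $\erf$ is continuous at each endpoint and this limit simply evaluates $\erf$ there, so it plays no active role in the present computation; it is retained in the definition only so that the identical closed form remains meaningful when the endpoints are pushed to $\pm\infty$ in the downstream Theorems~\ref{theo:bounded} and~\ref{theo:unbounded}.
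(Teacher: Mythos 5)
Your proof is correct: completing the square to extract the factor $e^{L_f^2h^2/2}$ and then substituting $u = (l+h^2L_f)/(h\sqrt{2})$ reproduces exactly the $\erf$ differences in $\varphi$, hence $2^{-1}\Psi(L_f,h,\tau^-,\tau^+)$. The paper states this proposition without proof, and your argument is precisely the standard computation its definitions of $\Psi$ and $\varphi$ are tailored to, so there is nothing to reconcile.
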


\begin{proposition}
	\label{prop:int2}
	Given $\tau^- < 0, \tau^+ \geq 0, h> 0$,
	\begin{align}
	\bigintss_{\tau^-}^{\tau^+}  \frac{\e{-\frac{l^2}{2h^2} + |l| L_f}}{\sqrt{2\pi h^2}}\de \lvec &  = 2^{-1} \zeta(h, \tau^-, \tau^+) \nonumber .
	\end{align}
\end{proposition}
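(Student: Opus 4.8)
The plan is to collapse the integral into two ordinary Gaussian integrals by exploiting the kink that $|l|$ introduces at the origin, and then to recognize the error-function antiderivative built into $\varphi$. Since $\tau^- < 0 \leq \tau^+$, I would first split the domain at $0$ and resolve the absolute value ($|l| = -l$ on the left, $|l| = l$ on the right):
\begin{equation}
\int_{\tau^-}^{\tau^+} \frac{\e{-\frac{l^2}{2h^2} + |l| L_f}}{\sqrt{2\pi h^2}}\de l = \int_{\tau^-}^{0} \frac{\e{-\frac{l^2}{2h^2} - l L_f}}{\sqrt{2\pi h^2}}\de l + \int_{0}^{\tau^+} \frac{\e{-\frac{l^2}{2h^2} + l L_f}}{\sqrt{2\pi h^2}}\de l. \nonumber
\end{equation}

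On each piece I would complete the square in the exponent. For the right integrand, $-\frac{l^2}{2h^2} + l L_f = -\frac{(l - h^2 L_f)^2}{2h^2} + \frac{L_f^2 h^2}{2}$, and symmetrically for the left integrand with $l + h^2 L_f$ replacing $l - h^2 L_f$. This factors out the common prefactor $\e{L_f^2 h^2/2}$ and leaves a shifted normalized Gaussian, whose antiderivative is $\tfrac{1}{2}\erf\!\big(\frac{l \mp h^2 L_f}{h\sqrt{2}}\big)$ — precisely the object inside $\varphi$. Evaluating at the endpoints, the right piece contributes $\tfrac{1}{2}\e{L_f^2 h^2/2}\big(\erf(\tfrac{\tau^+ - h^2 L_f}{h\sqrt{2}}) - \erf(\tfrac{-h^2 L_f}{h\sqrt{2}})\big)$ and the left piece $\tfrac{1}{2}\e{L_f^2 h^2/2}\big(\erf(\tfrac{h^2 L_f}{h\sqrt{2}}) - \erf(\tfrac{\tau^- + h^2 L_f}{h\sqrt{2}})\big)$.

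The step I would watch most carefully — and the only real subtlety — is matching signs against the definition of $\varphi$, whose argument carries a $+$ sign as in $\frac{g + h^2 L}{h\sqrt{2}}$. The endpoint terms arising from the right piece have the opposite sign inside $\erf$, so I would invoke the oddness $\erf(-x) = -\erf(x)$ to rewrite $\erf(\tfrac{\tau^+ - h^2 L_f}{h\sqrt{2}}) = -\varphi(-\tau^+, L_f, h)$ and $\erf(\tfrac{-h^2 L_f}{h\sqrt{2}}) = -\varphi(0, L_f, h)$, whereas the left-piece terms are already $\varphi(0, L_f, h)$ and $\varphi(\tau^-, L_f, h)$. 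Summing the two pieces then telescopes to
\begin{equation}
\tfrac{1}{2}\e{\frac{L_f^2 h^2}{2}}\big(2\varphi(0, L_f, h) - \varphi(-\tau^+, L_f, h) - \varphi(\tau^-, L_f, h)\big) = \tfrac{1}{2}\zeta(h, \tau^-, \tau^+), \nonumber
\end{equation}
which is the claim. The $\lim_{g\to l}$ wrapper in the definition of $\varphi$ is exactly what absorbs the limiting cases $\tau^\pm \to \pm\infty$, where $\erf$ saturates to $\pm 1$, so no separate argument is needed for unbounded limits. This computation mirrors Proposition~\ref{prop:int1}; the sole new ingredient is the split at the origin forced by $|l|$, which produces the two $\varphi(0, L_f, h)$ terms and hence the factor of $2$ in $\zeta$.
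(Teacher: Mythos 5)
Your proof is correct: splitting the integral at the origin, completing the square on each piece, and using the oddness of $\erf$ to match the sign convention in $\varphi$ yields exactly $2^{-1}\zeta(h,\tau^-,\tau^+)$, with the $\lim_{g\to l}$ in $\varphi$'s definition correctly absorbing the infinite-endpoint cases. The paper states this proposition without an explicit proof, but your route — reducing the two pieces to the Gaussian/erf computation of Proposition~\ref{prop:int1} with constants $\pm L_f$ — is precisely the intended one, as mirrored by the paper's displayed split-at-zero manipulation in Proposition~\ref{prop:int4}.
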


\begin{proposition}
	\label{prop:int3}
Given  $\tau^-, \tau^+ \in \mathbb{R}$ and $h > 0$,
\begin{align}
	\int_{\tau^-}^{\tau^+}\frac{\e{-\frac{l^2}{2h^2}- l L_f }}{\sqrt{2\pi h^2}}l L_m \de l & = 
     \frac{L_mh}{\sqrt{2\pi}}\left(\phi(\tau^-, L_f, h) - \phi(\tau^+, L_f, h)\right) - \frac{L_mL_fh^2}{2}\Psi(L_f, h, \tau^-,\tau^+) \nonumber .
\end{align}
\begin{proof}
\begin{align}
    & \int_{\tau^-}^{\tau^+}\frac{\e{-\frac{l^2}{2h^2}- l L_f }}{\sqrt{2\pi h^2}}l L_m \de l =  \frac{L_mh_i}{\sqrt{2\pi}}  \int_{\mathcal{C}}\frac{l}{h^2} \e{-\frac{l^2}{2h^2}- l L_f }\de l \nonumber \\
    & =  \frac{L_m h}{\sqrt{2\pi}}  \left(-\int_{\tau^-}^{\tau^+}\bigg(-\frac{l}{h^2}-L_f\bigg)  \e{-\frac{l^2}{2h^2}- l L_f }\de l   - L_f\sqrt{2\pi h^2}\int_{\tau^-}^{\tau^+}\frac{\e{-\frac{l^2}{2 h^2}- l L_f }}{\sqrt{2\pi h^2}}\de l\right)\nonumber  \\
    & = -\frac{L_m h}{\sqrt{2\pi}} \left[\e{-\frac{l^2}{2h^2}- l L_f }\right]_{\tau^-}^{\tau^+} - \frac{L_mL_fh^2}{2}\Psi(L_f, h, \tau^-, \tau^+). \nonumber 
\end{align}
\end{proof}
\end{proposition}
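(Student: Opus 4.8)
The plan is to exploit the first-order differential structure of the integrand. Writing $E(l)\defeq e^{-l^2/(2h^2) - l L_f}$ for the exponential weight, one checks directly that $E'(l) = (-l/h^2 - L_f)\,E(l)$, so that after rearranging,
\[
l\,E(l) = -h^2 E'(l) - h^2 L_f\, E(l).
\]
This identity is the whole point: it decomposes the ``$l$ times Gaussian-type weight'' appearing in the integrand into an \emph{exact derivative} plus a scalar multiple of $E(l)$ itself, the latter being exactly the integrand already handled in Proposition~\ref{prop:int1}. So the strategy is to integrate this identity over $(\tau^-,\tau^+)$ after multiplying through by $L_m/\sqrt{2\pi h^2}$, and treat the two resulting pieces separately.

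First I would dispatch the exact-derivative piece. By the fundamental theorem of calculus, $-h^2\int_{\tau^-}^{\tau^+} E'(l)\,\de l = -h^2\,[E(l)]_{\tau^-}^{\tau^+}$. Since $E(\tau^\pm)$ is precisely $\phi(\tau^\pm, L_f, h)$ by the definition of $\phi$, and using $\sqrt{2\pi h^2} = h\sqrt{2\pi}$ for $h>0$, this boundary contribution collapses to $\frac{L_m h}{\sqrt{2\pi}}\big(\phi(\tau^-, L_f, h) - \phi(\tau^+, L_f, h)\big)$, which is the first term of the claim.

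Next I would handle the residual piece $-h^2 L_f \int_{\tau^-}^{\tau^+} E(l)\,\de l$ without recomputing anything: Proposition~\ref{prop:int1} gives $\int_{\tau^-}^{\tau^+} E(l)/\sqrt{2\pi h^2}\,\de l = \tfrac12\Psi(L_f, h, \tau^-, \tau^+)$, and substituting this and simplifying the constants yields $-\frac{L_m L_f h^2}{2}\Psi(L_f, h, \tau^-, \tau^+)$, the second term. Adding the two pieces reproduces the stated closed form.

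The part requiring genuine care, rather than routine algebra, is the boundary behavior when $\tau^-$ or $\tau^+$ is allowed to be infinite: this is exactly why $\phi$ and $\varphi$ are written with the limits $\lim_{g\to l}$. The Gaussian factor $e^{-l^2/(2h^2)}$ dominates the linear-exponential factor $e^{-lL_f}$ at $\pm\infty$, which is what guarantees both absolute convergence of the integral and the vanishing of the boundary term $E(l)$ there; I would verify this domination explicitly so that the fundamental-theorem-of-calculus step and the appeal to Proposition~\ref{prop:int1} are both justified in the unbounded case. Everything else reduces to the single ODE identity above combined with that one citation.
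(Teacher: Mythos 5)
Your proof is correct and takes essentially the same route as the paper's: your identity $l\,E(l) = -h^2 E'(l) - h^2 L_f E(l)$ is precisely the paper's rewriting of the integrand as an exact derivative plus a multiple of the integrand of Proposition~\ref{prop:int1}, followed by the fundamental theorem of calculus for the boundary term and an appeal to that proposition for the remainder. Your explicit check of vanishing boundary terms at infinite endpoints goes slightly beyond the paper (which states the proposition for $\tau^-,\tau^+\in\mathbb{R}$ and leaves the limit behavior implicit in the definition of $\phi$), but it is a refinement, not a different argument.
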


\begin{proposition}
	\label{prop:int4}
	Given  $\tau^- <0 , \tau^+ \geq 0$ and $h > 0$,
	\begin{align}
	\int_{\tau^-}^{\tau^+}\frac{\e{-\frac{l^2}{2h^2} + |l| L_f }}{\sqrt{2\pi h^2}}|l| L_m \de l & = - \int_{0}^{\tau^+}\frac{\e{-\frac{l^2}{2h^2} - l (-L_f) }}{\sqrt{2\pi h^2}}l (-L_m) \de l - \int_{\tau^-}^{0}\frac{\e{-\frac{l^2}{2h^2} -l L_f }}{\sqrt{2\pi h^2}}l L_m \de l \nonumber \\
	& = \frac{L_m h}{\sqrt{2\pi}} \left(2 - \phi(\tau^+, -L_f, h)- \phi(\tau^-, L_h, h)\right) + \frac{L_mL_fh^2}{2}\zeta(h, \tau^-, \tau^+) \nonumber 
	\end{align}
\end{proposition}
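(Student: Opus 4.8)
The plan is to split the integral at $l=0$ so that the absolute value $|l|$ resolves into its two signed branches, and then to reduce each branch to an instance of Proposition~\ref{prop:int3}. The first equality in the statement is exactly this splitting step: on $[0,\tau^+]$ one has $|l|=l$, so the integrand is $\frac{e^{-l^2/(2h^2)+lL_f}}{\sqrt{2\pi h^2}}\,lL_m$, which I rewrite as $-\frac{e^{-l^2/(2h^2)-l(-L_f)}}{\sqrt{2\pi h^2}}\,l(-L_m)$; on $[\tau^-,0]$ one has $|l|=-l$, so the integrand is $-\frac{e^{-l^2/(2h^2)-lL_f}}{\sqrt{2\pi h^2}}\,lL_m$. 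This reproduces the two-integral decomposition displayed in the proposition.

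Next I would apply Proposition~\ref{prop:int3} to each piece. For the positive branch I invoke it with $L_f$ replaced by $-L_f$, $L_m$ replaced by $-L_m$, and limits $(0,\tau^+)$; for the negative branch I invoke it directly with limits $(\tau^-,0)$. Carrying the overall minus signs through, the $\phi$-terms combine into $\frac{L_m h}{\sqrt{2\pi}}\bigl(2-\phi(\tau^+,-L_f,h)-\phi(\tau^-,L_f,h)\bigr)$, where I use that $\phi(0,\pm L_f,h)=\lim_{g\to 0}e^{-g^2/(2h^2)\mp gL_f}=1$. The two $\Psi$-terms aggregate, after the signs cancel appropriately, into $\frac{L_mL_f h^2}{2}\bigl(\Psi(-L_f,h,0,\tau^+)+\Psi(L_f,h,\tau^-,0)\bigr)$.

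The one step that needs genuine care---and where the definition of $\zeta$ enters---is the identity $\Psi(-L_f,h,0,\tau^+)+\Psi(L_f,h,\tau^-,0)=\zeta(h,\tau^-,\tau^+)$. Expanding both $\Psi$ terms and factoring out $e^{L_f^2 h^2/2}$ leaves $\varphi(\tau^+,-L_f,h)-\varphi(0,-L_f,h)+\varphi(0,L_f,h)-\varphi(\tau^-,L_f,h)$. I would then exploit the oddness of $\erf$: since $\varphi(0,-L_f,h)=\erf(-hL_f/\sqrt{2})=-\varphi(0,L_f,h)$, the two central terms collapse to $2\varphi(0,L_f,h)$; and since $\varphi(\tau^+,-L_f,h)=\erf\bigl((\tau^+-h^2L_f)/(h\sqrt{2})\bigr)=-\erf\bigl((-\tau^++h^2L_f)/(h\sqrt{2})\bigr)=-\varphi(-\tau^+,L_f,h)$, the remaining term flips sign. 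Substituting both gives $e^{L_f^2 h^2/2}\bigl(2\varphi(0,L_f,h)-\varphi(-\tau^+,L_f,h)-\varphi(\tau^-,L_f,h)\bigr)$, which is precisely $\zeta(h,\tau^-,\tau^+)$.

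Combining the $\phi$-contribution of the second paragraph with this $\zeta$ identity yields the claimed closed form. The main obstacle is purely the sign bookkeeping across the two branches---keeping straight the doubled sign introduced when rewriting the positive branch and the single sign from the negative branch---so I would verify that the $\Psi$ prefactors both emerge with the common coefficient $+\frac{L_mL_f h^2}{2}$ before merging them. I also note that the ``$L_h$'' appearing in the stated right-hand side is a typo for $L_f$.
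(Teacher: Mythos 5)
Your proof is correct and takes essentially the same route as the paper: the paper's proposition is itself the proof sketch---split at $l=0$ so $|l|$ resolves into signed branches (the first displayed equality), then apply Proposition~\ref{prop:int3} to each branch with $(L_f,L_m)\to(-L_f,-L_m)$ on $[0,\tau^+]$ and directly on $[\tau^-,0]$---and your explicit verification that $\Psi(-L_f,h,0,\tau^+)+\Psi(L_f,h,\tau^-,0)=\zeta(h,\tau^-,\tau^+)$ via the oddness of $\erf$ merely fills in the step the paper leaves implicit. You are also right that ``$L_h$'' in the stated right-hand side is a typo for $L_f$.
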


\begin{proposition}
	\label{prop:intprodsum}
Given $\mathcal{C} \equiv \Omega(\bm{\tau}^-, \bm{\tau}^+)$, $p:\mathbb{R} \to \mathbb{R}$, $q:\mathbb{R} \to \mathbb{R}$,
\begin{equation}
    \int_{\mathcal{C}} \bigg(\prod_{i=1}^d p(z_i)\bigg)\bigg(\sum_{k=1}^dg(z_k)\bigg) \de \zvec = \sum_{k=1}^d \bigg(\prod_{i\neq k}^d\int_{\tau_i^-}^{\tau_i^+}p(z)\de z \bigg)\int_{\tau_k^-}^{\tau_k^+}p(z)q(z)\de z. \nonumber 
\end{equation}
\end{proposition}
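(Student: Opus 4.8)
The plan is to combine the linearity of the integral with the independent factorization granted by Proposition~\ref{prop:indipendentfactorization}. The left-hand integrand is a single product $\prod_{i=1}^d p(z_i)$ multiplied by the finite sum $\sum_{k=1}^d q(z_k)$; since the sum has only $d$ terms, I would first distribute it across the integral, turning the left-hand side into a sum of $d$ separate integrals, each of the form $\int_{\mathcal{C}} \bigl(\prod_{i=1}^d p(z_i)\bigr) q(z_k) \de \zvec$.

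For a fixed index $k$ I would then regroup the integrand so that the factor carrying the $k$-th coordinate is $p(z_k)\,q(z_k)$ while every other coordinate contributes only $p(z_i)$, i.e.\ $\bigl(\prod_{i=1}^d p(z_i)\bigr) q(z_k) = \bigl(p(z_k) q(z_k)\bigr)\prod_{i\neq k}^d p(z_i)$. This is again a product of $d$ univariate functions of the distinct coordinates, so Proposition~\ref{prop:indipendentfactorization} applies directly: it splits the $d$-dimensional integral over $\mathcal{C} \equiv \Omega(\taub^-, \taub^+)$ into a product of one-dimensional integrals over the intervals $(\tau_i^-,\tau_i^+)$, giving $\bigl(\int_{\tau_k^-}^{\tau_k^+} p(z) q(z) \de z\bigr)\prod_{i\neq k}^d \int_{\tau_i^-}^{\tau_i^+} p(z)\de z$. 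Summing the resulting expressions over $k=1,\dots,d$ reproduces the claimed right-hand side.

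The argument is essentially bookkeeping, so the only genuine subtlety lies in the hypotheses under which Proposition~\ref{prop:indipendentfactorization} may be invoked, namely that the one-dimensional integrals involved exist and that the coordinate-wise separation is legitimate (a Fubini/Tonelli-type condition on $\mathcal{C}$). Since that factorization is already established earlier in the excerpt, I would simply apply it termwise, and no further estimates are required. I would also note the harmless typographical mismatch in the statement, reading the summand $g(z_k)$ as $q(z_k)$ so that it agrees with the function $q$ appearing on the right-hand side.
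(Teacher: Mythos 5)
Your proof is correct: the paper states Proposition~\ref{prop:intprodsum} without any proof, and your argument---linearity of the integral to split the sum into $d$ terms, then a termwise application of Proposition~\ref{prop:indipendentfactorization} after regrouping the integrand as $\bigl(p(z_k)q(z_k)\bigr)\prod_{i\neq k} p(z_i)$---is precisely the routine justification the authors leave implicit. You are also right that the $g(z_k)$ in the statement is a typographical slip for $q(z_k)$, and your remark that the factorization needs a Tonelli/Fubini-type condition is apt (harmless here, since in the paper's application the integrands are nonnegative kernel-type functions).
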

\begin{proof}{Proof of Theorem~\ref{theo:bounded}}:
\begin{eqnarray}
    \bigg|\EV\Big[\lim_{n\to \infty}\approxx{f}_n(\xvec) \Big] - m(\xvec)\bigg| & = & \bigg|\EV\bigg[\lim_{n\to \infty}\frac{\sum_{i=1}^nK_{\hvec}(\xvec - \xvec_i)y_i}{\sum_{j=1}^nK_{\hvec}(\xvec - \xvec_j)} \bigg] - m(\xvec)\bigg| \nonumber \\
    & = & \bigg|\EV\bigg[\lim_{n\to \infty}\frac{n^{-1}\sum_{i=1}^nK_{\hvec}(\xvec - \xvec_i)y_i}{n^{-1}\sum_{j=1}^nK_{\hvec}(\xvec - \xvec_j)} \bigg] - m(\xvec)\bigg| \nonumber \\
    & = & \bigg|\EV\bigg[\frac{\int_{\Upsilon}K_{\hvec}(\xvec - \zvec)\big(m(\zvec) - \epsilon(\zvec)\big)f_X(\zvec)\de \zvec}{\int_{\Upsilon}K_{\hvec}(\xvec - \zvec)f_X(\zvec)\de \zvec} \bigg] - m(\xvec)\bigg| \nonumber \\
    & = & \bigg|\frac{\int_{\Upsilon}K_{\hvec}(\xvec - \zvec)m(\zvec) f_X(\zvec)\de \zvec}{\int_{\Upsilon}K_{\hvec}(\xvec - \zvec)f_X(\zvec)\de \zvec} - m(\xvec)\bigg| \nonumber \quad \quad \text{\eqref{ax:noise}} \\
    & = & \bigg|\frac{\int_{\Upsilon}K_{\hvec}(\xvec - \zvec)\big(m(\zvec)- m(\xvec) \big)f_X(\zvec)\de \zvec}{\int_{\Upsilon}K_{\hvec}(\xvec - \zvec)f_X(\zvec)\de \zvec}\bigg| \nonumber \\
    & = & \frac{\big|\int_{\Upsilon}K_{\hvec}(\xvec - \zvec)\big(m(\zvec)- m(\xvec) \big)f_X(\zvec)\de \zvec\big|}{\big|\int_{\Upsilon}K_{\hvec}(\xvec - \zvec)f_X(\zvec)\de \zvec\big|}. \nonumber 
\end{eqnarray}
We want to obtain an upper bound of the bias. Therefore we want to find an upper bound of the numerator and a lower bound of the denominator.

\textbf{Lower bound of the Denominator:\\}
The denominator is always positive, so the module can be removed,
\begin{eqnarray}
    \int_{\Upsilon}K_{\hvec}(\xvec - \zvec)f_X(\zvec)\de \zvec & = & \int_{\Upsilon}f_X(\zvec)\prod_{i=1}^d\frac{\e{-\frac{(\xvec_i-\zvec_i)^2}{2\hvec_i^2}}}{\sqrt{2\pi\hvec_i^2}}\de \zvec \nonumber \\
    & \geq & \int_{\mathcal{D}}f_X(\zvec)\prod_{i=1}^d\frac{\e{-\frac{(\xvec_i-\zvec_i)^2}{2\hvec_i^2}}}{\sqrt{2\pi\hvec_i^2}}\de \zvec \nonumber \quad  \text{(since $\mathcal{D} \subseteq \Upsilon $ and the integrand is always non-negative)}\\
    & = & \frac{\e{g(\xvec)}}{\int_{\Upsilon}\e{g(\zvec)}\de \zvec}\int_{\mathcal{D}}\e{g(\zvec)-g(\xvec)}\prod_{i=1}^d\frac{\e{-\frac{(\xvec_i-\zvec_i)^2}{2\hvec_i^2}}}{\sqrt{2\pi\hvec_i^2}}\de \zvec\nonumber \quad \quad \text{(Proposition~\ref{prop:loglipschitz})}\\
    & = &  f_X(\xvec)\int_{\overline{\mathcal{D}}}\e{g(\xvec+\lvec)-g(\xvec)}\prod_{i=1}^d\frac{\e{-\frac{l_i^2}{2 h_i^2}}}{\sqrt{2\pi h_i^2}}\de \lvec  \qquad \text{let $\mathbf{l} = \zvec - \xvec$ and $\overline{\mathcal{D}} \equiv \Omega(-\deltab^-, +\deltab^+)$} \nonumber \\
    & \geq & f_X(\xvec)\int_{\overline{\mathcal{D}}}\e{-|\lvec|L_f}\prod_{i=1}^d\frac{\e{-\frac{l_i^2}{2h_i^2}}}{\sqrt{2\pi h_i^2}}\de \lvec \qquad \text{(\ref{ax:log-lipschitz} + Lipschitz Inequality)} \nonumber \\
    & = & f_X(\xvec)\int_{\overline{\mathcal{D}}}\prod_{i=1}^d\frac{\e{-\frac{l_i^2}{2h_i^2}-l_i L_f}}{\sqrt{2\pi h_i^2}}\de \lvec  \nonumber 
\end{eqnarray}
Now considering Proposition~\ref{prop:indipendentfactorization} and Proposition~\ref{prop:int1}, we obtain
\begin{align}
    \int_{-\infty}^{+\infty}K_{\hvec}(\xvec - \zvec)f_X(\zvec)\de \zvec & \geq f_X(\xvec) 2^{-d}\prod_{i=1}^d \Psi(L_f, h_i, -\delta_i^-, \delta_i^+).
    \label{denominator}
\end{align}

\textbf{Upper bound of the Numerator:\\}
\begin{eqnarray}
    & & \bigg|\int_{\Upsilon}K_{\hvec}(\xvec - \zvec)\big(m(\zvec)- m(\xvec) \big)f_X(\zvec)\de \zvec\bigg| \nonumber \\
     & \leq & \int_{\Upsilon}K_{\hvec}(\xvec - \zvec)\left|m(\zvec)- m(\xvec) \right|f_X(\zvec)\de \zvec \nonumber \\ 
    & = & \int_{\mathcal{G}}K_{\hvec}(\xvec - \zvec)\left|m(\zvec)- m(\xvec) \right|f_X(\zvec)\de \zvec +  \int_{\Upsilon\backslash\mathcal{G}}K_{\hvec}(\xvec - \zvec)\left|m(\zvec)- m(\xvec) \right|f_X(\zvec)\de \zvec \nonumber  \\ 
    & \leq & \int_{\mathcal{G}}K_{\hvec}(\xvec - \zvec)\left|m(\zvec)- m(\xvec) \right|f_X(\zvec)\de \zvec +  f_X(\xvec)M\int_{\Upsilon\backslash\mathcal{G}}K_{\hvec}(\xvec - \zvec)\de \zvec \nonumber \quad \quad \text{\eqref{ax:maximum}}  \\ 
    & = & \frac{\e{g(\xvec)}}{\int_{\Upsilon}\e{g(\zvec)}\de \zvec}\int_{\mathcal{G}}\e{g(\zvec)-g(\xvec)}K_{\hvec}(\xvec - \zvec)\left|m(\zvec)- m(\xvec) \right|\de \zvec +  f_X(\xvec)M\int_{\Upsilon\backslash\mathcal{G}}K_{\hvec}(\xvec - \zvec)\de \zvec \nonumber  \\ 
    & \leq & f_X(\xvec)\left(\int_{\mathcal{G}}\e{g(\zvec)-g(\xvec)}K_{\hvec}(\xvec - \zvec)\left|m(\zvec)- m(\xvec) \right|\de \zvec +  M \xi_C \right) \nonumber  \quad \text{where $\xi_C = \int_{\Upsilon\backslash\mathcal{G}}K_{\hvec}(\xvec - \zvec)\de \zvec$}
\end{eqnarray}
where $\xi_C = 1 - 2^{-d}\prod_{i=1}^d \Psi(0, h_i, -\gamma_i^-, \gamma_i^+ )$ since $\int_{\Upsilon} = 1$.
Let $\mathcal{F} \equiv \Omega(\xvec - \bm{\phi}^-, \xvec + \bm{\phi}^+) \subseteq \mathcal{G}$, we will later define at our convenience.
\begin{align}
& f_X(\xvec)\left(\int_{\mathcal{G}}\e{g(\zvec)-g(\xvec)}K_{\hvec}(\xvec - \zvec)\left|m(\zvec)- m(\xvec) \right|\de \zvec +  M \xi_C \right) \nonumber  \\
=& f_X(\xvec)\left(\int_{\mathcal{F}}\e{g(\zvec)-g(\xvec)}K_{\hvec}(\xvec - \zvec)\left|m(\zvec)- m(\xvec) \right|\de \zvec + \int_{\mathcal{G}\backslash\mathcal{F}}\e{g(\zvec)-g(\xvec)}K_{\hvec}(\xvec - \zvec)\left|m(\zvec)- m(\xvec) \right|\de \zvec +  M \xi_C \right) \nonumber   \nonumber \\
\leq & f_X(\xvec)\left(\int_{\mathcal{F}}\e{g(\zvec)-g(\xvec)}K_{\hvec}(\xvec - \zvec)\left|m(\zvec)- m(\xvec) \right|\de \zvec + M \int_{\mathcal{G}\backslash\mathcal{F}}\e{g(\zvec)-g(\xvec)}K_{\hvec}(\xvec - \zvec)\de \zvec +  M \xi_C \right) \nonumber \\
= & f_X(\xvec)\left(\int_{\overline{\mathcal{F}}}\e{g(\xvec + \lvec)-g(\xvec)}K_{\hvec}(-\lvec)\left|m(\xvec + \lvec)- m(\lvec) \right|\de \lvec + M \int_{\overline{\mathcal{G}}\backslash\overline{\mathcal{F}}}\e{g(\xvec + \lvec)-g(\xvec)}K_{\hvec}(-\lvec)\de \lvec +  M \xi_C \right) \nonumber   \\
& \quad \quad \quad\text{with $\lvec = \zvec - \xvec$, $\overline{F} \equiv \Omega(-\bm{\phi}^-, \bm{\phi}^+)$ and $\overline{G} \equiv \Omega(-\bm{\gamma}^-, \bm{\gamma}^+)$} \nonumber \\
\leq & f_X(\xvec)\left(\int_{\overline{\mathcal{F}}}\e{L_f|\lvec|}K_{\hvec}(\lvec)L_m\left|\lvec\right|\de \lvec + M \int_{\overline{\mathcal{G}}\backslash\overline{\mathcal{F}}}\e{L_f|\lvec|}K_{\hvec}(\lvec)\de \lvec +  M \xi_C \right) \nonumber   \qquad \text{(\ref{ax:log-lipschitz}, \ref{ax:lipschitz} + Lipschitz Inequality)}
\end{align}
The first integral instead can be solved with Proposition~\ref{prop:int2}, Proposition~\ref{prop:int4} and Proposition~\ref{prop:intprodsum},
\begin{align}
& \int_{\overline{\mathcal{F}}}\e{L_f|\lvec|}K_{\hvec}(\lvec)\left|\lvec\right|\de \zvec \\
= &\int_{\overline{\mathcal{F}}} \left(\prod_{i=1}^d \frac{\e{-\frac{l_i^2}{2h_i^2} + |l_i| L_f }}{\sqrt{2\pi h_i^2}}\right)L_m\sum_{i=1}^d |l_i| \de l \de \zvec \nonumber \\
= &\sum_{k=1}^d \left(\prod_{i\neq k}^d\int_{-\phi_i^-}^{\phi_i^+}\frac{\e{-\frac{l_i^2}{2h_i^2} + |l_i| L_f }}{\sqrt{2\pi h_i^2}}\de z \right)\int_{-\phi_k^-}^{\phi_k^+}\frac{\e{-\frac{l_i^2}{2h_i^2} + |l_i| L_f }}{\sqrt{2\pi h_i^2}} L_m |l_i|\de z \nonumber \quad \text{(Proposition~\ref{prop:intprodsum})} \\
= & 2^{-d}\sum_{k=1}^d \left(\prod_{i\neq k}^d  \zeta(h_i, -\phi_i^-, \phi_i^+)\right)\Bigg(\frac{\sqrt{2}L_m h_k}{\sqrt{\pi}} \left(2 - \phi(\phi_k^+, -L_f, h_i)- \phi(-\phi_k^-, L_h, h_i)\right)\nonumber \\
& \quad \quad - L_mL_fh^2\zeta(h_k, -\phi_k^-, \phi_k^+)\Bigg) \nonumber .
\end{align}
The second integral can be solved using Proposition~\ref{prop:int4},
\begin{align}
	\int_{\mathcal{G}\backslash\mathcal{F}}\e{L_f|\lvec|}K_{\hvec}(\lvec)\de \zvec = & \int_{\mathcal{G}}\e{L_f|\lvec|}K_{\hvec}(\lvec)\de\lvec -\int_{\mathcal{F}}\e{L_f|\lvec|}K_{\hvec}(\lvec)\de \lvec \nonumber \\
	=&   2^{-d}\left(\prod_{i=1}^d\zeta(-\gamma_i^-, \gamma_i^+)  -  \prod_{i=1}^d\zeta(-\phi_i^-, \phi_i^+) \right) \nonumber . \\
\end{align}
A good choice for $\mathcal{F}$ is $\phi_i^-=\min(\gamma_i^-, M/L_f)$ and $\phi_i^+=\min(\gamma_i^+, M/L_f)$, as in this way we obtain a tighter bound.  
In last analysis, letting
\begin{align}
	{\xi_A}_k & = \frac{\sqrt{2}L_m h_k}{\sqrt{\pi}} \left(2 - \phi(\phi_k^+, -L_f, h_i)- \phi(-\phi_k^-, L_h, h_i)\right) - L_mL_fh^2\zeta(h_k, -\phi_k^-, \phi_k^+) \nonumber 
\end{align}
we arrive to
\begin{align}
	& \bigg|\EV\Big[\lim_{n\to \infty}\approxx{f}_n(\xvec) \Big] - m(\xvec)\bigg|\nonumber \\
	& \quad \quad \leq \frac{\sum\limits_{k=1}^d {\xi_A}_k\prod\limits_{i\neq k}^d  \zeta(h_i, -\phi_i^-, \phi_i^+) + M\left(\prod\limits_{i=1}^d\zeta(-\gamma_i^-, \gamma_i^+)  -  \prod\limits_{i=1}^d\zeta(-\phi_i^-, \phi_i^+) + 2^d{\xi_C}\right) }{\prod_{i=1}^d \Psi(L_f, h_i, -\delta_i^-, \delta_i^+)} \nonumber 
\end{align}
showing the correctness of Theorem~\ref{theo:bounded}.
\end{proof}
In order to prove Theorem~\ref{theo:unbounded} we shall note that $\Upsilon \equiv \mathcal{G} \equiv \mathcal{F}$ , therefore the lower bound can be  bounded by
\begin{eqnarray}
\int_{\Upsilon}K_{\hvec}(\xvec - \zvec)f_X(\zvec)\de \zvec & = & \int_{\Upsilon}f_X(\zvec)\prod_{i=1}^d\frac{\e{-\frac{(\xvec_i-\zvec_i)^2}{2\hvec_i^2}}}{\sqrt{2\pi\hvec_i^2}}\de \zvec \nonumber \\
& \geq & f_X(\xvec)\int_{\overline{\Upsilon}}\prod_{i=1}^d\frac{\e{-\frac{l_i^2}{2h_i^2}-l_i L_f}}{\sqrt{2\pi h_i^2}}\de \lvec  \nonumber \\
& = & f_X(\xvec) 2^{-d}\prod_{i=1}^d \Psi(L_f, h_i, -\upsilon_i^-, \upsilon_i^+)
\end{eqnarray}

for the numerator, instead
\begin{eqnarray}
& & \bigg|\int_{\Upsilon}K_{\hvec}(\xvec - \zvec)\big(m(\zvec)- m(\xvec) \big)f_X(\zvec)\de \zvec\bigg| \nonumber \\
& \leq & f_X(\xvec)\int_{\Upsilon}\e{g(\zvec)-g(\xvec)}K_{\hvec}(\xvec - \zvec)\left|m(\zvec)- m(\xvec) \right|\de \zvec  \nonumber  \\
&\leq & f_X(\xvec)\int_{\overline{\Upsilon}}\e{L_f|\lvec|}K_{\hvec}(\lvec)L_m\left|\lvec\right|\de \lvec  \nonumber \qquad \text{where $\overline{\Upsilon} \equiv \Omega(\upsilon^-, \upsilon^+)$}
\end{eqnarray}
and therefore, for the reasoning already made for Theorem~\ref{theo:bounded},
\begin{align}
& \bigg|\EV\Big[\lim_{n\to \infty}\approxx{f}_n(\xvec) \Big] - m(\xvec)\bigg|  \leq \frac{\sum\limits_{k=1}^d {\xi_A}_k\prod\limits_{i\neq k}^d  \zeta(h_i, -\upsilon_i^-, \upsilon_i^+)}{\prod_{i=1}^d \Psi(L_f, h_i, -\upsilon_i^-, \upsilon_i^+)} \nonumber
\end{align}
where $\xi_{A_k}, \zeta, \Psi, \varphi$ are defined as in Theorem~\ref{theo:bounded} and $\phi_i^-=\upsilon_i^-, \phi_i^+=\upsilon_i^+$.

\end{document}